%% file: main.tex
\documentclass[sigconf]{acmart}

\usepackage{algorithm}
\usepackage{algpseudocode}

\usepackage{array}
\usepackage{multirow}
\usepackage{graphicx}
\usepackage{booktabs}
\usepackage{xcolor}
\usepackage{bm}
\usepackage{amsthm}
\usepackage{resizegather}
\newtheorem{theorem}{Theorem}

\usepackage{url}

\usepackage{threeparttable}
\usepackage{tikz}
\usepackage{mathtools}
\usepackage{subcaption}
\usepackage{marvosym}

\usepackage{enumitem} 

\usepackage{shortcut}

\AtBeginDocument{%
  }

\copyrightyear{2025}
\acmYear{2025}
\setcopyright{cc}
\setcctype{by}
\acmConference[WWW '25]{Proceedings of the ACM Web Conference 2025}{April
28-May 2, 2025}{Sydney, NSW, Australia}
\acmBooktitle{Proceedings of the ACM Web Conference 2025 (WWW '25), April
28-May 2, 2025, Sydney, NSW, Australia}
\acmDOI{10.1145/3696410.3714737}
\acmISBN{979-8-4007-1274-6/2025/04}

\begin{document}

\title{AI Model Modulation with Logits Redistribution}

\author{Zihan Wang}
\authornote{Equal contribution.}
\authornote{Zihan Wang is supported by the Google PhD Fellowship.}
\affiliation{%
  \institution{University of Queensland,\\
  CSIRO's Data61}
  \streetaddress{St Lucia}
  \city{Brisbane}
  \country{Australia}}

\author{Zhongkui Ma}
\authornotemark[1]
\affiliation{%
  \institution{University of Queensland}
  \streetaddress{St Lucia}
  \city{Brisbane}
  \country{Australia}}

\author{Xinguo Feng}
\affiliation{%
  \institution{University of Queensland}
  \streetaddress{St Lucia}
  \city{Brisbane}
  \country{Australia}
  }

\author{Zhiyang Mei}
\affiliation{%
  \institution{University of Queensland}
  \streetaddress{St Lucia}
  \city{Brisbane}
  \country{Australia}
  }

\author{Ethan Ma}
\affiliation{%
  \institution{University of Queensland}
  \streetaddress{St Lucia}
  \city{Brisbane}
  \country{Australia}
  }

\author{Derui Wang}
\affiliation{%
  \institution{CSIRO's Data61}
  \country{Melbourne, Australia}
  }

\author{Minhui Xue}
\affiliation{%
  \institution{CSIRO's Data61}
  \country{Adelaide, Australia}
  }

\author{Guangdong Bai}
\authornote{Guangdong Bai is the corresponding author. Email: \href{mailto:g.bai@uq.edu.au}{g.bai@uq.edu.au}.}
\affiliation{%
  \institution{University of Queensland}
  \streetaddress{St Lucia}
  \city{Brisbane}
  \country{Australia}
  }

\renewcommand{\shortauthors}{Zihan Wang et al.}

\input{0_abstract-cr}

\begin{CCSXML}
<ccs2012>
   <concept>
       <concept_id>10010147.10010257</concept_id>
       <concept_desc>Computing methodologies~Machine learning</concept_desc>
       <concept_significance>500</concept_significance>
       </concept>
   <concept>
       <concept_id>10010147.10010178</concept_id>
       <concept_desc>Computing methodologies~Artificial intelligence</concept_desc>
       <concept_significance>500</concept_significance>
       </concept>
 </ccs2012>
\end{CCSXML}

\ccsdesc[500]{Computing methodologies~Machine learning}
\ccsdesc[500]{Computing methodologies~Artificial intelligence}

\keywords{Neural Networks, Model Modulation, Usage Control}


\maketitle

\input{1_introduction}
\input{2_modulation}
\input{3_approach}
\input{4_experiments}
\input{5_related}
\input{6_conclusion}

\bibliographystyle{ACM-Reference-Format}
\bibliography{sample-base}

\appendix
\input{appendix}

\end{document}

%% file: 0_abstract-cr.tex
\begin{abstract}
Large-scale models are typically adapted to meet the diverse requirements of model owners and users. 
However, maintaining multiple specialized versions of the model is inefficient.
In response, we propose \codename, a novel \textit{model modulation} paradigm that enables a single model to exhibit diverse behaviors to meet the specific end requirements. 
\codename enables two key modulation modes: \textit{utility} and \textit{focus} modulations. 
The former provides model owners with dynamic control over output quality to deliver varying utility levels, and the latter offers users precise control to shift model's focused input features. 
\codename introduces a \emph{logits redistribution strategy} that operates in a \emph{training data-agnostic} and \emph{retraining-free} manner. 
We establish a formal foundation to ensure \codename's regulation capability, based on the statistical properties of logits ordering via joint probability distributions. 
Our evaluation confirms \codename's practicality and versatility for AI model modulation, with tasks spanning image classification, semantic segmentation and text generation, and prevalent architectures including ResNet, SegFormer and Llama. 
\end{abstract}

%% file: 1_introduction.tex
\section{Introduction}

Deep neural networks (DNNs) have revolutionized various industries such as healthcare~\cite{kaushik2020ai}, finance~\cite{cheng2022financial}, autonomous vehicles~\cite{ravindran2020multi}, and natural language processing~\cite{lauriola2022introduction}, enabling significant breakthroughs in tasks like image recognition~\cite{krizhevsky2012imagenet}, semantic segmentation~\cite{long2015fully}, and language translation~\cite{vaswani2017attention}. 
Despite their success, the development of high-quality models demands extensive computational resources, massive datasets, and substantial financial investment. 
This has restricted large-scale training to organizations with the necessary infrastructure, as seen with GPT-3~\cite{2020gpt3}, which comprises 175 billion parameters and takes 355 GPU-years and \$4.6M for a single training run~\cite{chuan2023gpttech,taylor2023chatgpt,wang2024corelocker,yan2024exploring}. 

While the AI community continues to push the boundary of model performance in complex tasks, a critical challenge in the new era of AI revolves around managing the intellectual property of established models and adapting them to meet diverse needs of downstream tasks. 
Specifically, for model owners, the ability to maintain \emph{controllability} is paramount, which enables them to deploy and customize models for different market segments and operational environments with varying business goals. 
For model users, they seek \emph{adaptability}, desiring models that can adjust their behavior to suit individual preferences and contextual needs.
These demands are illustrated by two typical scenarios presented below:

\textbf{\textit{Scenario \#1~(model owners)}}. An online service provider offers different service tiers. Free-tier users receive lower-quality outputs, such as reduced resolution or basic code suggestions. Premium users, however, get enhanced results with higher quality and additional features. Real-world examples include cutout.pro~\cite{cutpro} and together.ai~\cite{togetherai}, which provide models with free low utility options or varying capabilities at different price points.

\textbf{\textit{Scenario \#2~(users)}}. 
Individual users interacting with AI systems, such as driving assistance platforms, often seek adaptability in the model's behavior to suit their preferences~\cite{haas2020advanced,rana2024automotive,liu2024purpose,chuan25analysis}. 
For instance, one driver may prioritize highlighting vehicles on the road, while another may emphasize detecting pedestrians. 
Such personalization has been offered in advanced driver assistance systems (ADAS)~\cite{haas2020advanced} to match individual driving styles like assertive or defensive driving, which improves user comfort and acceptance~\cite{rana2024automotive,autonomous2017personalize}.

Traditional techniques such as early exit~\cite{early2016branchynet,li2023predictive,xia2023window,zhou2020bert} 
and fine-tuning~\cite{oquab2014learning,howard2018universal} can be employed to control model utility or adapt established models to specific tasks or constraints. 
Early exit introduces intermediate exit points at different layers within a neural network, allowing early termination of inference for faster but potentially less accurate predictions. 
However, implementing early exit requires architectural modifications, which may not always be feasible due to limited model accessibility and can complicate integration and maintenance. 
Fine-tuning adjusts a pre-trained model to a new task by retraining it on a smaller, task-specific dataset. 
Nevertheless, fine-tuning requires access to training data and involves retraining or additional optimization steps~\cite{liu2022few,oquab2014learning,howard2018universal,chen2024large,wang2023data}. 
Even though both techniques can produce multiple tailored versions, managing these versions across a large user base is impractical, as the cost of maintaining consistency and applying updates across versions is prohibitively high~\cite{wang2025rekey}. 
These limitations underscore the need for a flexible, lightweight approach that allows modulation of the model's usage without retraining or altering the model's architecture.

\paragraph{Our work} In this work, we propose a novel paradigm of \emph{model modulation} that enables a \emph{single} model to exhibit diverse behaviors, so as to satisfy the requirements of different utility levels or different feature focuses with a single model. 
This paradigm holds broad applicability in modern AI deployment, where controlling model utility levels or adjusting model prioritization is critical, such as in machine learning as a service (MLaaS)~\cite{ribeiro2015mlaas} and on-device deployment~\cite{nain2022towards}. 
Ideally, the model modulation gets rid of the necessity of altering the underlying model parameters or architecture, and introduces controlled adjustments to the model's responses. 
The core research question of model modulation lies in \emph{how to dynamically adapt the performance and behavior of a single model without the burden of retraining or maintaining multiple separate versions}.

We introduce \codename~(\underline{AI} \underline{M}odulator) as an approach to model modulation. 
It supports two types of modulation modes: \textit{utility} and \textit{focus} modulations. Utility modulation makes the model output deviate from the original output, which is useful in scenarios where restricted responses are desired~\textbf{\textit{(Scenario \#1)}}. 
Focus modulation aims to make the model more responsive to specific areas of interest, which is helpful for subsystems of safety-critical systems to anticipate specific potential hazards~\textbf{\textit{(Scenario \#2)}}. 
The challenges to address by \codename are at least twofold. 
For model owners seeking \emph{controllability}, it is important to ensure that utility modulation preserves the model's core knowledge so that, even when output quality is reduced, the outputs remain meaningful
(\textit{e.g.,} large language models should always deliver coherent outputs across utility levels)
and do not compromise the model's integrity~\textbf{\textit{(Challenge \#1)}}.
For users desiring \emph{adaptability}, balancing the trade-off in focus modulation between emphasizing specific inputs (such as prioritizing certain features in ADAS) and maintaining overall performance is essential, as too much intervention would affect the model's effectiveness in other areas~\textbf{\textit{(Challenge \#2)}}. 

To maintain the model's core knowledge~\textbf{\textit{(Challenge \#1)}}, \codename avoids altering feature-learning structures within the model.
Instead, it directly operates on and strategically adjusts the model's logits to transform the original network (denoted as $f^*$) into a modulated network (denoted as $f^{\epsilon}$) that exhibits target behaviors. 
In particular, it incorporates a \textit{control function} $\Lambda$ that redistributes the model's logits by adjusting their values according to specific probability distributions. This allows for fine-grained control while maintaining the model's integrity.
Besides enabling model owners to offer varying utility tiers, this granular approach strikes a balance between responsiveness to specific features and overall performance. 
This flexibility allows users to tailor the model's behavior to their needs, enhancing responsiveness without compromising the model's overall effectiveness~\textbf{\textit{(Challenge \#2)}}. 
Since logits serve as a common intermediate representation across architectures, \codename operates as a \emph{training data-agnostic} and \emph{retraining-free} process by directly modifying off-the-shelf trained networks, making it well-suited for seamless integration across diverse neural network architectures.

We provide a robust formal foundation as the theoretical guarantee of \codename's effectiveness. 
Its core is to establish a direct relationship between the model's behavior pre- and post- logits redistribution. 
By analyzing the statistical properties of logits through joint probability distributions, we quantify how controlled interventions affect their distribution and order. 
Our formal analysis ensures that, given specific conditions on the logits' distribution, the probability of achieving a desired modulation outcome can be precisely controlled. 
This formalization lays the groundwork for a probabilistic analysis of model behavior, offering a solid formal foundation for model modulation.

We conduct extensive evaluations across a wide range of application domains and model architectures to validate \codename. 
Our evaluation spans image classification, semantic segmentation, and text generation, utilizing prevalent deep neural network architectures such as ResNet-56~\cite{he2016deep}, SegFormer-B2~\cite{xie2021segformer}, and Llama-3.1-8B~\cite{touvron2023llama}. 
Through utility modulation, \codename successfully provides model owners with fine-grained control over model behavior across all settings. 
\codename's focus modulation, on the other hand, significantly enhances the model's ability to prioritize key features without compromising overall performance. 
For example, in an autonomous driving task, \codename yields substantial improvement in the pedestrian segmentation accuracy of a model that is originally trained to be focused on vehicle recognition.
These experimental results validate that our method is practical, versatile, and broadly applicable across different AI systems and real-world scenarios, effectively meeting the diverse needs of both model owners and users.

\paragraph{Contributions} Our main contributions are:

\begin{itemize}[left=0.1cm]
\item \textbf{A new problem formulation of AI model modulation.} 
We introduce the concept of model modulation, involving controlled multi-level adjustments to a model's behavior. 
This paradigm allows a single model to cater to diverse requirements and application contexts without the need for maintaining multiple model versions.

\item \textbf{A generic modulation approach.} \codename is the first practical schema for AI model modulation, featured by its lightweight, data-agnostic, and retraining-free
attributes. 
It supports two modulation modes of \textit{utility} and \textit{focus} modulations. 
\codename efficiently modulates the model's output by redistributing the logits through a control function that adjusts them according to specific probability distributions.

\item \textbf{A formal framework and theoretical analysis.} We provide a robust theoretical framework for analyzing the impact of noise on the ordering of logits in neural networks. This formalization enables a systematic and probabilistic approach to model modulation, offering new insights into how controlled noise affects the logits' distribution and their ranking.

\item \textbf{Extensive empirical evaluation.} We implement \codename and validate its effectiveness across various application domains, including image classification, semantic segmentation, and text generation, using prevalent neural network architectures such as ResNet, SegFormer, and Llama. Our results demonstrate that \codename offers fine-grained control for model owners while enhancing feature prioritization for users, all without compromising overall performance.

\end{itemize}

\paragraph{Availability} Our code is publicly available at: \url{https://github.com/UQ-Trust-Lab/AIM/}.

%% file: 2_modulation.tex
\section{Problem Formulation}

In this section, we introduce the preliminaries regarding neural networks (Section~\ref{sec:dnn_definition}) to facilitate the understanding of our work.
We then discuss the specific challenges associated with managing and adapting trained models (Section~\ref{sec:challenges}) and formally define the concept of model modulation (Section~\ref{sec:aim_definition}).

\subsection{Deep Neural Networks}
\label{sec:dnn_definition}

Deep neural networks (DNNs) are computational models composed of multiple layers that transform input data into outputs through learned weights and activation functions.
They have achieved remarkable success in various domains by effectively modeling complex patterns and relationships in data~\cite{lecun2015deep}.
Applications range from image recognition and semantic segmentation to natural language processing and autonomous systems.

Formally, a DNN can be represented as a function $f: \mathbb{R}^m \rightarrow \mathbb{R}^n$, mapping an input vector $x \in \mathbb{R}^m$ to an output vector $y \in \mathbb{R}^n$.
Each layer in the network performs a linear transformation followed by a non-linear activation, allowing the network to capture intricate features through multiple levels of abstraction~\cite{oopsla25ma}.

Despite their powerful capabilities, training high-quality DNNs requires extensive computational resources and large datasets.
The complexity and resource intensity of this process have led to a concentration of development within organizations that possess substantial infrastructure~\cite{strubell2020energy}.
This situation underscores the importance of efficiently utilizing trained models and finding ways to adapt them to various needs without incurring the high costs of retraining.

\subsection{Motivation}
\label{sec:challenges}

Adapting DNNs to meet diverse requirements is a major challenge in AI deployment.
Model owners need \emph{controllability} to adjust models for various contexts without retraining, while users seek \emph{adaptability} to tailor models to their needs.
However, several challenges hinder these objectives:
\begin{itemize}[left=0.1cm]
    \item  \textbf{Inflexibility}:
    Once optimized for specific tasks, trained models lack the inherent flexibility to adjust to new contexts or business needs.
    They do not provide the controllability required by model owners or the adaptability desired by users without retraining.
    \item \textbf{Limitations of Traditional Adaptation Approaches}:
    Methods like fine-tuning require access to original training data and substantial resources~\cite{howard2018universal}, while techniques like early exits demand architectural modifications, which are often constrained by model accessibility~\cite{li2023predictive}.
    \item \textbf{Maintenance Overhead}:
    Managing multiple tailored versions of a model is complex and costly, complicating updates and consistency. 
    \item \textbf{Performance Trade-offs}:
    Emphasizing specific features can degrade overall performance, making it difficult to maintain balance without retraining.
\end{itemize}
These challenges highlight the need for a flexible and efficient approach that allows a single model to adjust its behavior dynamically without retraining.

\subsection{Defining Model Modulation}
\label{sec:aim_definition}

Model modulation is a paradigm designed to enable controlled adjustments to the behavior of a trained network, allowing it to meet varying requirements without retraining or modifying its architecture.
Specifically, for a trained neural network $f^*$, model modulation applies a {control function $\Lambda$} parameterized by $\epsilon$.
This function adjusts the model's output to produce a modulated model $f^\epsilon$, defined as
\begin{gather*}
    f^\epsilon(x) = \Lambda(f^*(x), \epsilon),
\end{gather*}
where $ \epsilon $ represents the modulation parameters controlling the adjustments, depending on the type of modulation.

We formalize two primary modes of modulation: \emph{utility} modulation and \emph{focus} modulation, each designed to address the specific conditions for both model owners and users.

\subsubsection{Utility Modulation}
\label{sec:def_utility}

The objective of utility modulation is to enable model owners to control the utility level of the model's outputs while preserving the core knowledge embedded within the model.
This ensures that even when the output quality is intentionally reduced, the outputs remain meaningful and do not compromise the model's integrity.

Specifically, utility modulation aims to ensure that the performance of the modulated model $f^\epsilon$ decreases in a predictable and controlled manner as $\epsilon$ increases. Formally, given a performance metric $M$ and two constants $\epsilon_1$ and $\epsilon_2$, we require
\begin{gather*}
    M(f^{\epsilon_1}) \leq M(f^{\epsilon_2}), \quad \forall \epsilon_1 \geq \epsilon_2 \ge 0.
\end{gather*}
Meanwhile, to ensure the integrity of the modulation process, we further impose the condition
\begin{gather*}
    |M(f^{\epsilon_1}) - M(f^{\epsilon_2})| < \Delta(\delta), \quad \forall |\epsilon_1 - \epsilon_2| \leq \delta,
\end{gather*}
where $\delta$ {and $\Delta(\delta)$ are small constants}, with $\Delta(\delta)$ being a function of $\delta$.
This guarantees gradual and fine-grained control over the model's utility, enabling precise adjustments to its performance.

\subsubsection{Focus Modulation}
\label{sec:define_focus}

Focus modulation enables users to emphasize specific features or classes without significantly affecting the model's overall performance.
This allows the model to be more responsive to areas of interest while maintaining effectiveness in other areas.

Specifically, it aims for the performance of the modulated model $f^\epsilon$ to maintain stable overall performance under the metric $M$ while enhancing a specified metric $E$ as $\epsilon$ increases.
Formally, for any two given constants $\epsilon_1$ and $\epsilon_2$, we require
\begin{gather*}
    | M(f^{\epsilon_1}) - M(f^{\epsilon_2}) | \leq \Delta \wedge E(f^{\epsilon_1}) \geq E(f^{\epsilon_2}), \quad \forall \epsilon_1 \geq \epsilon_2 \ge 0,
\end{gather*}
where $\Delta$ is a small constant representing acceptable performance deviation.

%% file: 3_approach.tex
\section{Our Approach -- \codename}
\label{sec:approach}

Given the objective to modulate the model's output to align with varying user needs and application scenarios, a natural question arises: \emph{where should this adjustment take place?}
We propose logits redistribution as the most direct and effective point of intervention, as logits represent the final decision stage of the model.
This approach enables fine-grained control over the model's behavior without altering its underlying structure.
Two key types of modulation are introduced: utility modulation~(Section~\ref{sec:approach:utility_modulation}), which adjusts the output quality, and focus modulation~(Section~\ref{sec:approach:sensitivity_modulation}), which enhances the model's attention to specific features of inputs.

\subsection{Logits Redistribution}
\label{sec:approach:logits_modulation}

\subsubsection{Model Logits}

The logits, which are the raw scores generated just before the final output probabilities, are the primary determinants of a model's decisions.
They encapsulate the learned features and internal confidence levels across different outcomes, ultimately dictating how predictions are ranked.
Even minor modifications to the logits can significantly impact the model's final output, making them an ideal point for implementing controlled adjustments.

By conceptualizing the neural network as comprising two components, \textit{i.e.}, the feature extractor before the logits and the probability mapper after, the logits emerge as the most direct and effective point for modulation.
Formally, let $f_1$ denote the function mapping the input $x$ to the logits $\hat{y}$, and $f_2$ represent the function that maps $\hat{y}$ into the final output $y$. The overall network can be expressed as
\begin{gather*}
    f = f_2 \circ f_1,
\end{gather*}
where $\hat{y} = f_1(x)$ and $y = f_2(\hat{y})$.

\subsubsection{Logits Redistribution}

Based on this insight, \codename introduces a control function {$\Lambda: \mathbb{R}^n \to \mathbb{R}^n$} that directly operates on the logits to modulate the model's output.
The modulated logits are obtained as {$\hat{y}' = \Lambda(\hat{y})$}, and the overall network becomes
{
\begin{gather*}
f = f_2 \circ \Lambda \circ f_1,
\end{gather*}
}
where $f_1$ extracts features from the input, {$\Lambda$} modulates the logits, and $f_2$ maps these modulated logits to the final output.
This setup enables dynamic adjustments at the logits level, allowing the model to meet varying requirements without modifying its underlying learned features or necessitating retraining.

Our framework applies the control function {$\Lambda$} to introduce targeted shifts to the logits by adding noise sampled from specific statistical distributions or by applying deterministic adjustments.
Formally, we adjust the logits as
\begin{gather*}
    \hat{y}' = \Lambda(\hat{y}),
\end{gather*}
which influences the model's output probabilities while preserving the internal feature representations and decision logic.
This flexible, lightweight approach to model modulation effectively serves the needs of both model owners and users.
As illustrated in Figure~\ref{fig:visual_aid}, \codename's logit redistribution is visualized through bell-shaped colored regions representing the probability distributions of added noise, with color intensity reflecting the magnitude of the original logits. The noise perturbs the local ordering of the logits, enabling controlled adjustments to the model's output.

\subsection{Utility Modulation}
\label{sec:approach:utility_modulation}

\begin{figure}[t]
    \centering
    \includegraphics[width=\linewidth]{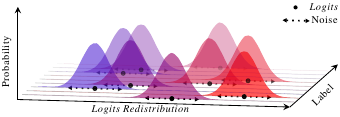}
    \caption{An illustration of \codename's logits redistribution.}
    \label{fig:visual_aid}
    \vspace{-0.5cm}
\end{figure}

Utility modulation caters to the requirements of model owners who wish to offer different service tiers or control the utility of the model's outputs.
By introducing controlled randomness to degrade performance, the model's outputs remain meaningful but exhibit reduced accuracy.
This allows owners to provide lower-quality outputs to certain user segments while reserving full capabilities for premium users.

\subsubsection{Definition}

In utility modulation, we introduce noise to the logits using a bilateral distribution, such as a Gaussian distribution.
The modulation is defined as
\begin{gather*}
    \Lambda(\hat{y}_i) = \hat{y}_i + \epsilon_i, \quad 
    \epsilon \sim \mathcal{N}(0, \sigma^2),
\end{gather*}
where {$\epsilon_i$} is noise sampled independently for each logit $\hat{y}_i$.
By adjusting the standard deviation $\sigma$, model owners can control the degree of utility degradation, with higher noise levels leading to lower-quality outputs.

\subsubsection{Analysis}

To quantify the impact of noise on the model's predictions, we analyze the probability that the ordering of the logits remains unchanged after adding noise, which implies that the model maintains its top prediction.

\begin{theorem}
\label{thm:order_change}
    Let $\hat{y} = (\hat{y}_1, \hat{y}_2, \ldots, \hat{y}_n)$ be a vector of logits with an ordering $\hat{y}_{\tau_1} \leq \hat{y}_{\tau_2} \leq \cdots \leq \hat{y}_{\tau_n}$, where $\tau$ is a permutation of ${1, 2, \ldots, n}$.
    Let $\epsilon = (\epsilon_1, \epsilon_2, \ldots, \epsilon_n)$ be a vector of \textit{i.i.d.} Gaussian random variables $\epsilon_i \sim \mathcal{N}(0, \sigma^2)$.
    Define the perturbed logits as $\hat{y}' = \hat{y} + \epsilon$.
    The probability that the ordering of the logits remains unchanged after perturbation is   
    \begin{equation*}
        \Pr\left(\hat{y}'_{\tau_1} \leq \hat{y}'_{\tau_2} \leq \cdots \leq \hat{y}'_{\tau_n}\right) =
        \prod_{i=1}^{n-1} \Phi\left(\frac{\Delta_i}{\sqrt{2}\sigma}\right),
    \end{equation*} 
    where $\Delta_i = \hat{y}_{\tau_{i+1}} - \hat{y}_{\tau_i}$ and $\Phi(\cdot)$ is the cumulative distribution function (CDF) of the standard normal distribution. 
\end{theorem}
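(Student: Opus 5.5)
The plan is to reduce the ordering event to a statement about consecutive differences of the noise. The chain $\hat{y}'_{\tau_1} \leq \cdots \leq \hat{y}'_{\tau_n}$ holds if and only if each of the $n-1$ adjacent inequalities $\hat{y}'_{\tau_{i+1}} - \hat{y}'_{\tau_i} \geq 0$ holds. Write $D_i = \epsilon_{\tau_{i+1}} - \epsilon_{\tau_i}$. Each adjacent inequality is then the event $E_i = \{D_i \geq -\Delta_i\}$, and the target probability is $\Pr\bigl(\bigcap_{i=1}^{n-1} E_i\bigr)$.

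\textbf{Step 1: the marginals.} Since the $\epsilon_j$ are i.i.d.\ $\mathcal{N}(0,\sigma^2)$, each $D_i \sim \mathcal{N}(0, 2\sigma^2)$. By symmetry of the Gaussian,
\begin{equation*}
\Pr(E_i) = \Pr\bigl(D_i \geq -\Delta_i\bigr) = \Phi\left(\frac{\Delta_i}{\sqrt{2}\sigma}\right),
\end{equation*}
which is exactly the $i$-th factor in the claimed formula.

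\textbf{Step 2: combining the events.} This is the main obstacle. The product form requires the events $E_i$ to be independent. They are not: $D_i$ and $D_{i+1}$ share the variable $\epsilon_{\tau_{i+1}}$, so $\mathrm{Cov}(D_i, D_{i+1}) = -\sigma^2$, a correlation of $-1/2$. Non-adjacent differences are independent.

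A quick sanity check shows the formula cannot be exact as stated. Take $n=3$ and $\Delta_1 = \Delta_2 = 0$. The exact probability is the chance that three i.i.d.\ continuous variables land in a prescribed order, which is $1/6$. The product instead gives $\Phi(0)^2 = 1/4$.

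\textbf{Step 3: what can actually be proved.}
\begin{itemize}
\item \emph{Exact expression.} The probability equals the orthant probability of the Gaussian vector $(D_1,\ldots,D_{n-1})$. Its covariance matrix is tridiagonal, with $2\sigma^2$ on the diagonal and $-\sigma^2$ off it.
\item \emph{Upper bound.} Compare this vector with one having the same marginals but independent coordinates. The comparison raises every off-diagonal covariance from $-\sigma^2$ or $0$ to $0$. By Slepian's inequality, raising correlations can only increase the orthant probability. Hence the true probability is at most $\prod_{i=1}^{n-1}\Phi\bigl(\Delta_i/(\sqrt{2}\sigma)\bigr)$.
\item \emph{When the product is nearly exact.} Equality would hold in the idealized model where each difference is perturbed independently. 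The product is also a good approximation when the gaps $\Delta_i$ are large relative to $\sigma$, since then every factor is near $1$ and the dependence contributes only higher-order terms.
\end{itemize}

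I would therefore present the theorem in one of two honest forms. The first is the upper bound above. The second is the exact product formula under an explicit assumption that the adjacent differences are independent. In either form I would flag that the statement as written overstates the result, and point to the $n=3$ counterexample.
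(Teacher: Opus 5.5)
Your proposal is correct, and the gap you found is exactly where the paper's proof breaks. The paper carries out your Step~1: it reduces to $\Delta_i \ge \epsilon_{\tau_i}-\epsilon_{\tau_{i+1}}$ and gets the marginal $\Phi(\Delta_i/(\sqrt{2}\sigma))$. It then concludes with ``since the events are independent (due to the noise terms' independence).'' That is false for the reason you give: consecutive differences share $\epsilon_{\tau_{i+1}}$.

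Your $n=3$, $\Delta_1=\Delta_2=0$ counterexample ($1/6$ versus $1/4$) is valid, and the failure is not an artifact of ties. For $n=3$ the exact value is the bivariate normal CDF $\Phi_2(a_1,a_2;-1/2)$ with $a_i=\Delta_i/(\sqrt{2}\sigma)$. Since $\partial_\rho \Phi_2(a_1,a_2;\rho)=\phi_2(a_1,a_2;\rho)>0$, this lies strictly below $\Phi(a_1)\Phi(a_2)$ for every choice of gaps. Plackett's identity gives the same strictness for every $n\ge 3$ and $\sigma>0$, so the stated product is exact only when $n=2$.

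Your Slepian upper bound is also sound. Apply it to $X_i=\epsilon_{\tau_i}-\epsilon_{\tau_{i+1}}\le\Delta_i$, comparing against an independent vector with the same variances $2\sigma^2$; Slepian's inequality permits distinct thresholds.

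The same bound also follows from Harris's inequality, for any independent noise. The events with odd $i$ use disjoint pairs of noise variables, so they are mutually independent, and likewise those with even $i$. Under the coordinatewise order that reverses $\epsilon_{\tau_j}$ for odd $j$, the odd events are increasing and the even ones decreasing, so the two intersections are negatively correlated.

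To make your ``large gaps'' remark rigorous, combine the union bound with Bonferroni. With $p_i=\Phi(-a_i)$,
\[
1-\sum_i p_i \;\le\; \Pr\bigl(\hat{y}'_{\tau_1}\le\cdots\le\hat{y}'_{\tau_n}\bigr) \;\le\; \prod_i (1-p_i) \;\le\; 1-\sum_i p_i+\sum_{i<j}p_ip_j ,
\]
so the product overstates the true probability by at most $\sum_{i<j}p_ip_j$.

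Finally, the error propagates to Theorem~\ref{thm:order_change_rate}. That theorem differentiates the product, so it does not give the derivative of the true probability. The monotonicity claimed in Remark~2 still holds: the true probability equals $\Pr(Z_i\le\Delta_i/\sigma\ \forall i)$ for a Gaussian vector $Z$ whose law does not depend on $\sigma$. It is therefore non-increasing in $\sigma$.
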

\begin{proof}
    We aim to calculate the probability that the ordering of the elements in the perturbed vector $\hat{y}' = \hat{y} + \epsilon$ remains the same as the original ordering in $\hat{y}$, \textit{i.e.,} $\Pr(\hat{y}'_{\tau_1} \leq \hat{y}'_{\tau_2} \leq \cdots \leq \hat{y}'_{\tau_n}).$
    This requires that, for all $i \in \{1, 2, \dots, n-1\}$, 
    $\hat{y}_{\tau_{i+1}} + \epsilon_{\tau_{i+1}} \geq \hat{y}_{\tau_i} + \epsilon_{\tau_i}$. Rewriting this inequality, we obtain
    $\hat{y}_{\tau_{i+1}} - \hat{y}_{\tau_i} \geq \epsilon_{\tau_i} - \epsilon_{\tau_{i+1}}$.
    Define the gap between adjacent elements of the ordered logits as
    $\Delta_i = \hat{y}_{\tau_{i+1}} - \hat{y}_{\tau_i}$.
    Therefore, for each $i$, the condition simplifies to
    $\Delta_i \geq \epsilon_{\tau_i} - \epsilon_{\tau_{i+1}}$.

    Since each $\epsilon_i \sim \mathcal{N}(0, \sigma^2)$, the difference $\epsilon_{\tau_i} - \epsilon_{\tau_{i+1}}$ follows a normal distribution
    $\epsilon_{\tau_i} - \epsilon_{\tau_{i+1}} \sim \mathcal{N}(0, 2\sigma^2)$.
    Thus, the probability that the ordering is preserved for the $i$-th pair is given by
    \begin{gather*}
        \Pr(\Delta_i \geq \epsilon_{\tau_i} - \epsilon_{\tau_{i+1}})
        = \Phi\left( \frac{\Delta_i}{\sqrt{2}\sigma} \right),
    \end{gather*}
    where $\Phi(\cdot)$ is the CDF of the standard normal distribution.
    
Since the events are independent (due to the noise terms' independence), the probability of the entire order being preserved is the product of the probabilities over all pairs, completing the proof.

\end{proof}
\vspace{-8pt}
\begin{formal}
    \textbf{Remark 1:}
    \textit{
Theorem~\ref{thm:order_change} establishes a direct relationship between model utility and noise variance $\sigma^2$, enabling model owners to precisely regulate utility degradation by adjusting $\sigma^2$. Increasing $\sigma^2$ disrupts the original logits' ordering, reducing predictive accuracy, while the inherent continuity of the utility function and noise distribution ensures fine-grained control over performance levels. This mechanism aligns with Section~\ref{sec:def_utility}, allowing tailored adjustments to meet diverse service requirements.}
\end{formal}
\vspace{-3pt}

To further understand the impact of \codename's logits redistribution, we analyze its rate of change with respect to the noise variance $\sigma^2$.
\begin{theorem}
\label{thm:order_change_rate}
Given the vector $\hat{y}$ and the noise vector $\epsilon$ as in Theorem~\ref{thm:order_change}, the rate of change of the probability that the order is preserved with respect to the noise variance $\sigma^2$ is    \begin{gather*}
        \frac{d}{d \sigma^2} \Pr(\hat{y}'_{\tau_1} \leq \hat{y}'_{\tau_2} \leq \cdots \leq \hat{y}'_{\tau_n})
        = \sum_{i=1}^{n-1} \left( -\frac{\Delta_i}{2\sqrt{2}\sigma^3} \cdot \phi\left( \frac{\Delta_i}{\sqrt{2}\sigma} \right) \prod_{j \neq i} \Phi\left( \frac{\Delta_j}{\sqrt{2}\sigma} \right) \right),
    \end{gather*}
    where $\phi(\cdot)$ is the probability density function (PDF) of the standard normal distribution
    $\phi(z) = \frac{1}{\sqrt{2\pi}} e^{-z^2 / 2}$
    and $\Phi(\cdot)$ is the CDF of the standard normal distribution.
\end{theorem}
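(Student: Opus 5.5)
The plan is to take the closed form from Theorem~\ref{thm:order_change} as given and differentiate it directly. Writing $P(\sigma^2)$ for the order-preservation probability, Theorem~\ref{thm:order_change} gives
\begin{equation*}
P(\sigma^2) = \prod_{i=1}^{n-1} g_i(\sigma^2), \qquad g_i(s) = \Phi\left(\frac{\Delta_i}{\sqrt{2}\, s^{1/2}}\right),
\end{equation*}
where I reparametrize by $s=\sigma^2>0$ so that $\sigma = s^{1/2}$. Each $g_i$ is a composition of smooth functions on $s>0$: $\Phi$ is $C^\infty$ with $\Phi'=\phi$, and $s\mapsto s^{-1/2}$ is smooth on $(0,\infty)$. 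So $P$ is differentiable for every $\sigma>0$, and the product rule applies.

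First, I apply the generalized product rule:
\begin{equation*}
\frac{dP}{ds} = \sum_{i=1}^{n-1} g_i'(s) \prod_{j\neq i} g_j(s).
\end{equation*}
This already produces the outer sum over $i$ and the factor $\prod_{j \neq i} \Phi\left( \frac{\Delta_j}{\sqrt{2}\sigma} \right)$ in the statement.

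Second, I compute each $g_i'(s)$ by the chain rule:
\begin{equation*}
g_i'(s) = \phi\left(\frac{\Delta_i}{\sqrt{2}\sigma}\right)\cdot \frac{\Delta_i}{\sqrt{2}} \cdot \frac{d}{ds}s^{-1/2} = \phi\left(\frac{\Delta_i}{\sqrt{2}\sigma}\right)\cdot \frac{\Delta_i}{\sqrt{2}}\cdot\left(-\frac{1}{2}s^{-3/2}\right).
\end{equation*}
Substituting $s^{-3/2}=\sigma^{-3}$ gives $-\frac{\Delta_i}{2\sqrt{2}\sigma^3}\,\phi\left(\frac{\Delta_i}{\sqrt{2}\sigma}\right)$. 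Inserting this into the product-rule sum yields exactly the claimed expression. As a sanity check, every $\Delta_i\ge 0$ and $\phi,\Phi>0$, so each summand is nonpositive. The probability is therefore non-increasing in $\sigma^2$, which matches Remark 1.

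I see no real technical obstacle; the only points needing care are the following.
\begin{itemize}
\item The differentiation must be taken with respect to $\sigma^2$ rather than $\sigma$. Doing it with respect to $\sigma$ would lose the factor $\tfrac{1}{2\sigma}$ and give the wrong power of $\sigma$.
\item The result is only claimed for $\sigma>0$, where the formula is smooth.
\end{itemize}
One caveat is worth flagging. The statement inherits the product form from Theorem~\ref{thm:order_change}, whose proof treats the adjacent-pair events as independent. Since consecutive differences $\epsilon_{\tau_i}-\epsilon_{\tau_{i+1}}$ share a noise term, the present argument is rigorous as a derivative of that closed form, and I would state it that way.
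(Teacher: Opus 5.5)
Your proposal is correct and follows the paper's proof essentially verbatim: the paper sets $z_i = \Delta_i/(\sqrt{2}\sigma)$, computes $dz_i/d\sigma^2 = -\Delta_i/(2\sqrt{2}\sigma^3)$, applies the chain rule to $\Phi(z_i)$, and then applies the product rule to $\prod_{i}\Phi(z_i)$. Your caveat is well-founded and applies equally to the paper's argument: the adjacent differences $\epsilon_{\tau_i}-\epsilon_{\tau_{i+1}}$ and $\epsilon_{\tau_{i+1}}-\epsilon_{\tau_{i+2}}$ have correlation $-1/2$, so the product in Theorem~\ref{thm:order_change} is not the exact probability (for $n=3$ with all $\Delta_i=0$ the true value is $1/6$, not $1/4$), and what both proofs establish is the derivative of that closed-form expression rather than of the true order-preservation probability.
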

\begin{proof}
    Let $z_i = \frac{\Delta_i}{\sqrt{2} \sigma}$.
    The derivative of $z_i$ with respect to $\sigma^2$ is
    \begin{gather*}
        \frac{d z_i}{d \sigma^2} 
        = \frac{d}{d \sigma^2} \left( \frac{\Delta_i}{\sqrt{2}\sigma} \right) = -\frac{\Delta_i}{2\sqrt{2}\sigma^3}.
    \end{gather*}
    Then, using the chain rule to differentiate $\Phi(z_i)$, we have
    \begin{gather*}
        \frac{d}{d \sigma^2} \Phi(z_i) = \phi(z_i) \cdot \frac{d z_i}{d \sigma^2} = \phi(z_i) \cdot \left( -\frac{\Delta_i}{2\sqrt{2}\sigma^3} \right),
    \end{gather*}
    where $\phi(z_i)$ is the probability density function of the standard normal distribution
    $\phi(z_i) = \frac{1}{\sqrt{2\pi}} e^{-z_i^2 / 2}$.
    Next, applying the product rule to the entire product and
    \begin{gather*}
        \frac{d}{d \sigma^2} \prod_{i=1}^{n-1} \Phi(z_i)
        = \sum_{i=1}^{n-1} \left( \frac{d}{d \sigma^2} \Phi(z_i) \prod_{j \neq i} \Phi(z_j) \right).
    \end{gather*}
    Substituting the derivative of $\Phi(z_i)$, we have the desired result.
\end{proof}
\vspace{-8pt}
\begin{formal}
    \textbf{Remark 2:}
    \textit{
    The negative derivative indicates that as the noise variance $\sigma^2$ increases, the probability of preserving the original logits order decreases, causing utility degradation.
    This probability drops sharply when $\sigma^2$ nears the mean of differences between logits ($\Delta_i$), leading to rapid changes in predictions, ensuring the effectiveness of \codename.
    }
\end{formal}
\vspace{-6pt}

\subsection{Focus Modulation}
\label{sec:approach:sensitivity_modulation}

\subsubsection{Definition}

Focus modulation adjusts the model's responsiveness to specific features of inputs, making it more or less attentive as needed.
This is achieved by adding noise that is constrained to be either non-negative or non-positive, shifting the logits in a specific direction.
Formally, we modulate the logits as
\begin{gather*}
    \Lambda(\hat{y}_i) = \hat{y}_i \pm |\epsilon|, \quad \epsilon \sim \mathcal{N}(0, \sigma^2),
\end{gather*}
where the sign $\pm$ is chosen to increase or decrease the emphasis on the target class or feature.
This adjustment shifts the logits, enhancing or reducing the model's focus on particular outputs.

For example, in a driving assistance system, applying a positive shift (adding $|\epsilon|$) to the car detection component increases the model's attention to car hazards, causing the vehicle to react more readily to car obstacles and potentially leading to more frequent interventions.
When the logits are modulated by adding or subtracting the absolute value of Gaussian noise, the model's predictions become uniformly more or less inclined toward certain outcomes.
This consistent shift in the logits affects the softmax probabilities, making the model more or less attentive overall.

\subsubsection{Analysis}

Consider two logits $\hat{y}_i$ (target) and $\hat{y}_j$ (reference).
We analyze the scenario in which non-negative noise modifies a specific logit.
Our analysis focuses on the probability that this adjustment changes the model's prediction.
\begin{theorem}
\label{thm:sensitivity_change}
    Given $\hat{y}_i \leq \hat{y}_j$ and a noise $\epsilon \sim \mathcal{N}(0, \sigma^2)$, the probability that $\hat{y}'_i = \hat{y}_i + |\epsilon|$ remains less than or equal to $\hat{y}_j$ is
    \begin{gather*}
        \Pr(\hat{y}'_i \leq \hat{y}_j) 
        = 2\Phi\left(\frac{\hat{y}_j - \hat{y}_i}{\sigma}\right) - 1,
    \end{gather*}
    where $\Phi(\cdot)$ is the CDF of the standard normal distribution.
\end{theorem}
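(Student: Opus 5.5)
The plan is to reduce the event to a statement about the half-normal random variable $|\epsilon|$ and then express its CDF through $\Phi$. Set $d = \hat{y}_j - \hat{y}_i \geq 0$, which is deterministic by hypothesis. The event $\hat{y}'_i \leq \hat{y}_j$ is exactly $\hat{y}_i + |\epsilon| \leq \hat{y}_j$, that is, $|\epsilon| \leq d$. So the whole claim reduces to computing $\Pr(|\epsilon| \leq d)$ for $\epsilon \sim \mathcal{N}(0,\sigma^2)$. Unlike Theorem~\ref{thm:order_change}, no differences of independent Gaussians enter here, because only one logit is perturbed.

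The key steps are as follows.

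\begin{enumerate}
\item Rewrite the absolute-value event as the two-sided interval event $\{-d \leq \epsilon \leq d\}$. This is valid because $d \geq 0$.
\item Standardize with $Z = \epsilon/\sigma \sim \mathcal{N}(0,1)$. This gives
\begin{gather*}
\Pr(-d \leq \epsilon \leq d) = \Phi\left(\frac{d}{\sigma}\right) - \Phi\left(-\frac{d}{\sigma}\right).
\end{gather*}
Since $\epsilon$ is continuous, endpoints carry zero mass, so it does not matter whether the inequalities are strict.
\item Use the symmetry of the standard normal, $\Phi(-z) = 1 - \Phi(z)$, to collapse the expression to $2\Phi(d/\sigma) - 1$.
\item Substitute $d = \hat{y}_j - \hat{y}_i$ back in to obtain the stated formula.
\end{enumerate}

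There is no serious obstacle here. The only point needing care is the hypothesis $\hat{y}_i \leq \hat{y}_j$. It guarantees $d \geq 0$, so the interval $[-d,d]$ is nonempty and the result lies in $[0,1]$. In the boundary case $d=0$ the formula gives $0$, which matches $\Pr(|\epsilon| \leq 0) = 0$. If $d$ were negative, the probability would be $0$ rather than the negative value the formula would produce, so I would note explicitly that the hypothesis is used at step 1.
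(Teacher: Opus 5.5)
Your proof is correct and follows the paper's argument essentially step for step: reduce the event to $|\epsilon| \leq \hat{y}_j - \hat{y}_i$, write the folded-normal CDF as $\Phi(x/\sigma) - \Phi(-x/\sigma)$, and collapse it via symmetry to $2\Phi(x/\sigma) - 1$. Your explicit notes on where $\hat{y}_i \leq \hat{y}_j$ is used and on the boundary case $d = 0$ are small additions that the paper leaves implicit; both arguments tacitly assume $\sigma > 0$.
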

\begin{proof}
    Given two logits $\hat{y}_i$ and $\hat{y}_j$ such that $\hat{y}_i \leq \hat{y}_j$, we consider the modulation of the target logit $\hat{y}_i$ with the noise term $\epsilon \sim \mathcal{N}(0, \sigma^2)$.
    We define the modified logit as $\hat{y}_i' = \hat{y}_i + |\epsilon|$ post-modulation.
    To determine the probability that the order of the logits remains unchanged, we need to evaluate
    $\Pr(\hat{y}_i' \leq \hat{y}_j) = \Pr(\hat{y}_i + |\epsilon| \leq \hat{y}_j)$.
    This can be rewritten as
    $\Pr(|\epsilon| \leq \hat{y}_j - \hat{y}_i)$.

    The absolute value $|\epsilon|$ follows a folded normal distribution.
    The CDF of $|\epsilon|$ can be derived from the properties of the normal distribution. 
    Specifically, we have
    \begin{gather*}
        \Pr(|\epsilon| \leq x) = \Pr(-x \leq \epsilon \leq x) = \Phi\left(\frac{x}{\sigma}\right) - \Phi\left(-\frac{x}{\sigma}\right) = 2\Phi\left(\frac{x}{\sigma}\right) - 1,
    \end{gather*}
    where $\Phi(\cdot)$ is the CDF of the standard normal distribution.

    Thus, let $x = \hat{y}_j - \hat{y}_i$, we obtain
    \begin{gather*}
        \Pr(|\epsilon| \leq \hat{y}_j - \hat{y}_i) = 2\Phi\left(\frac{\hat{y}_j - \hat{y}_i}{\sigma}\right) - 1.
    \end{gather*}
    This concludes the theorem, with the probability of the logits' order remaining unchanged after modulation.
\end{proof}
The other cases of focus modulation can be derived by combining the results for any pair of logits, and the case $\hat{y}_i \geq \hat{y}_j$ follows by symmetry.
The case $\hat{y}'_i = \hat{y}_i - |\epsilon|$ with $\hat{y}_i \leq \hat{y}_j$ is not considered, because it does not change the order of the logits.

\begin{formal}
    \textbf{Remark 3:}
    \textit{
   This theorem introduces a tunable control mechanism where the noise variance $\sigma^2$ directly influences the model's focus intensity on targeted logits. By scaling $\sigma^2$, users amplify or dampen the system's sensitivity to specific features, achieving application-aligned behavior without structural changes. Crucially, this process preserves the core ordering dynamics (as formalized in Section~\ref{sec:define_focus}), ensuring stability while enabling strategic prioritization of critical inputs.
    }
\end{formal}
\vspace{-3pt}

%% file: 4_experiments.tex
\section{Experimental Evaluation} \label{sec:experiments}

To validate the effectiveness of our proposed modulation method, \codename, we conduct comprehensive experiments addressing the two primary scenarios outlined in the introduction: providing different utility levels for model owners (\textbf{\textit{Scenario \#1}}) and enabling users to tailor model behavior to their preferences (\textbf{\textit{Scenario \#2}}). We evaluate both modulation modes -- \emph{utility modulation} and \emph{focus modulation} -- across various tasks and models. These experiments demonstrate how \codename allows dynamic adjustments to model behavior without retraining or modifying model parameters or architecture, achieving both the \emph{controllability} desired by model owners and the \emph{adaptability} sought by users.

\subsection{Experimental Setup}
To showcase the flexibility and broad applicability of \codename, we conduct experiments using models and datasets from various domains, including image classification, semantic segmentation, and text generation. The datasets represent widely recognized benchmarks across these tasks:

\begin{itemize}[left=0.1cm]
    \item \textbf{CIFAR-10 and CIFAR-100}~\cite{cifar10}: Standard benchmarks for image classification, each containing 60,000 colored images; 10 and 100 classes, respectively.
    \item \textbf{ADE20K}~\cite{zhou2017scene}: A large-scale scene parsing dataset comprising over 20,000 images across 150 semantic categories, commonly used for semantic segmentation tasks.
    \item \textbf{KITTI}~\cite{Geiger2012CVPR}: A real-world dataset collected from autonomous driving scenarios, providing data for tasks such as 2D/3D object detection, optical flow, and semantic segmentation. 
    \item \textbf{GSM8K}~\cite{cobbe2021gsm8k}: Consists of 8,500 high-quality grade-school-level math word problems, designed to evaluate the mathematical reasoning capabilities of language models.
    \item \textbf{MMLU}~\cite{hendryckstest2021}: The Massive Multitask Language Understanding benchmark with 57 diverse tasks (STEM, humanities, etc.) to evaluate language model reasoning and understanding.
\end{itemize}

While \codename can be applied to any trained model, we use several common DNNs as a proof-of-concept, such as ResNet-56~\cite{he2016deep}, SegFormer-B2~\cite{xie2021segformer}, and Llama-3.1-8B~\cite{touvron2023llama}. To demonstrate that \codename is \emph{retraining-free}, we directly use pre-trained models with weights public online. By applying \codename to these models and datasets, we demonstrate its ability to offer both controllability for model owners and adaptability for users across a variety of AI applications.

\subsection{Utility Modulation}
Utility modulation adjusts the model's output to provide varying levels of utility. By controlling the noise level, owners can modulate model performance, allowing a basic version to be available to all users while encouraging upgrades for enhanced features.

\subsubsection{Implementation} 
We apply utility modulation across all models by redistributing the model logits through the addition of controlled Gaussian noise with zero mean and varying standard deviations ($\sigma$). Specifically, the noise level is increased in increments of 0.2, allowing for fine-grained control over the modulation process. In cases where the model has smaller logits variance (\textit{e.g.,} the Llama model due to normalization in the final layer), the process stops earlier based on the logits' mean and standard deviation to ensure effective modulation and stability.

\subsubsection{Results} 
The impact on overall performance for computer vision tasks is illustrated in Figure~\ref{fig:cv_curve}. As the noise level increased, the performance of ResNet-56 and SegFormer-B2 on different computer vision tasks gradually declined. For example, on CIFAR-10, the classification accuracy dropped from {94.37\%} (original model) to {20.00\%} as $\sigma$ increased from 0 to 20. At a moderate noise level ($\sigma = 5.0$), the accuracy was reduced to {72.08\%}, representing a basic utility level suitable for demonstration purposes. On CIFAR-100, accuracy falls from {72.62\%} to {4.59\%} over the same range of $\sigma$. At $\sigma = 5.0$, the accuracy is {43.62\%}. Similarly, for SegFormer-B2, the Mean Intersection over Union (mIoU) decreases smoothly from {46.20\%} (original model) to {1.24\%} as $\sigma$ increases. At $\sigma = 3.0$, the mIoU is {31.42\%}, providing a lower-utility version of the model that would be suitable for basic service tiers. \codename's utility modulation demonstrates fine-grained control over model performance across computer vision tasks, enabling strategic adjustments aligned with business goals without retraining or maintaining multiple models.

\begin{figure}[t]
    \centering
\includegraphics[width=\linewidth]{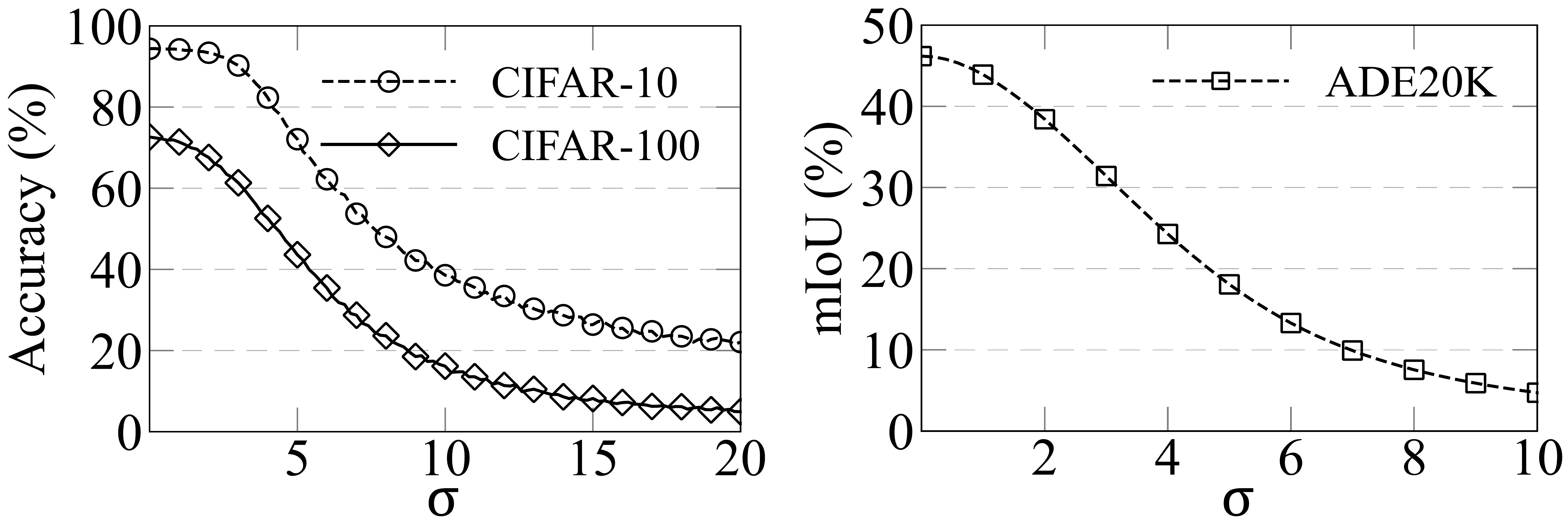}
    \caption{Classification and semantic segmentation performance under varying noise levels ($\sigma$) for utility modulation.}
    \vspace{-0.3cm}
\label{fig:cv_curve}
\end{figure}

Apart from conventional computer vision tasks, we also conduct experiments on large language models (LLMs) to demonstrate the practicality and uniqueness of applying \codename to text generation tasks. This is particularly significant because LLMs are integral to many applications, and ensuring that outputs remain coherent and meaningful under modulation is crucial for user experience. In particular, by applying \codename to LLMs, we highlight the \emph{property of knowledge preservation}, where the model's language capabilities are preserved despite utility modulation.

We assess the utility modulation capabilities of \codename on the powerful LLaMA-3.1-8B model. As shown in Figure~\ref{fig:llm}, the performance degraded smoothly with increasing $\sigma$. On GSM8K, accuracy decreased from {80.74\%} to {2.12\%}. At $\sigma = 1.6$, accuracy was {59.36\%}. On MMLU, accuracy decreased from {66.40\%} to {28.03\%} over the range of $\sigma$. Notably, even at higher noise levels, the generated text remains grammatically correct and coherent but tends to become excessively verbose and redundant. This increased verbosity can sometimes lead to incorrect answers, as the unnecessary elaboration may introduce confusion or logical errors~\cite{ma23icfem}. Despite this, \codename's \emph{knowledge preservation} property ensures that the model often maintains grammatical correctness, even when some content becomes inaccurate due to over-explanation. Example 1 in Appendix~\ref{lst:prompt_example1} showcases sample outputs for the MMLU question illustrated in Figure~\ref{fig:llm}; under utility modulation ($\sigma=2.2$), the response is more verbose and includes superfluous details compared to the baseline ($\sigma=0$). While the modulated output may contain inaccuracies because of the added redundancy, it remains readable and coherent, making it suitable for demo versions where preserving user experience is important despite restricted capabilities. Additional examples illustrating this behavior are provided in Appendix~\ref{sec:appendix_results}, with some verbose responses being correct (examples 1, 2, 4), while others lead to incorrect answers (example 3).

The results across all datasets and models demonstrate that \codename's utility modulation effectively adjusts the utility level of models. By controlling the noise level $\sigma$, model owners can offer models with reduced performance as basic versions, encouraging users to upgrade for full capabilities. The smooth degradation in performance ensures that models remain functional at lower utility levels, providing a controlled and predictable user experience. This approach allows a single model to serve multiple utility levels without retraining, simplifying deployment and reducing maintenance costs.

\begin{figure}[t]
    \centering
\includegraphics[width=\linewidth]{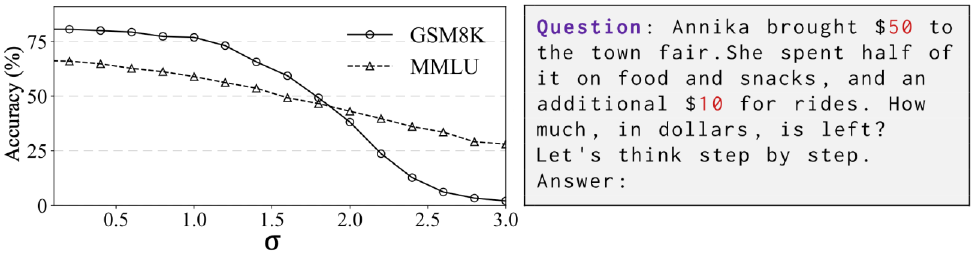}
    \caption{Performance of Llama-3.1-8B on GSM8K and MMLU datasets with different noise levels ($\sigma$), accompanied by a sample MMLU question.
    }
    \vspace{-0.3cm}
\label{fig:llm}
\end{figure}

\begin{figure}[t]
    \centering
\includegraphics[width=\linewidth]{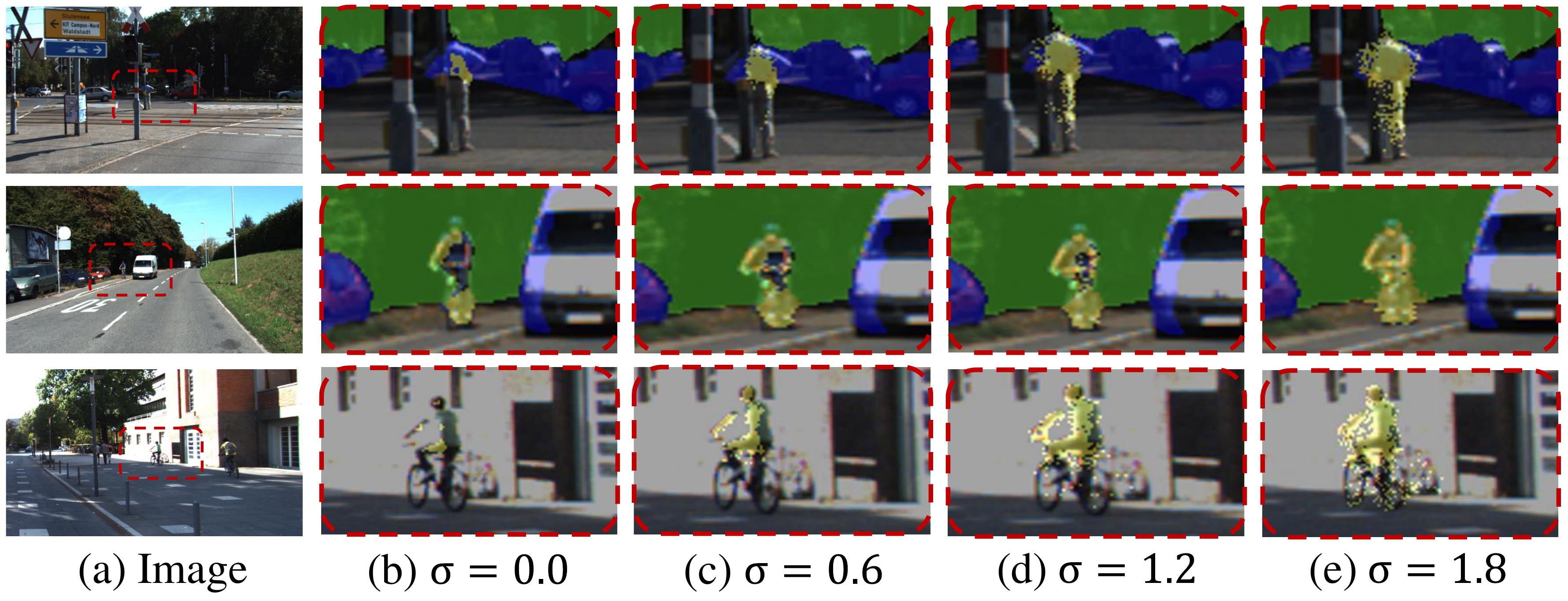}
    \caption{Segmentation of pedestrians improves progressively with moderate noise levels (c-e) compared to no noise ($\sigma = 0$), where pedestrians are partially or not detected (b).}
    \vspace{-0.5cm}
\label{fig:seg}
\end{figure}

\subsubsection{Discussion} Our empirical results confirm a three-stage performance trajectory under utility modulation, closely matching our theoretical framework in Section~\ref{sec:approach}. At low noise levels, performance remains high because the top logits clearly stand out, making small perturbations insufficient to disrupt their ordering. As noise intensifies to moderate values, it becomes comparable to the typical gap between logits, triggering rapid reshuffling and frequent misclassifications. This middle phase is where noise has its largest overall impact, causing accuracy to drop sharply as logit dominance is lost. At high noise levels, adding more perturbation yields diminishing returns. With logits already heavily disrupted, performance settles near random-guessing accuracy.

This progression is especially revealing for real-world deployment, particularly in publicly accessible demo models. In the moderate noise range, model owners can precisely tune the utility constraints to preserve core functionality while limiting access to only part of the model's potential. Users see a functional system that clearly demonstrates the model's power, yet also notice the benefits of upgrading to a more capable version. This approach aligns well with business strategies that offer a free tier for broad access and a premium tier for users who require higher-quality results.

\subsection{Focus Modulation}

\begin{figure}[t]
    \centering
    \includegraphics[width=\linewidth]{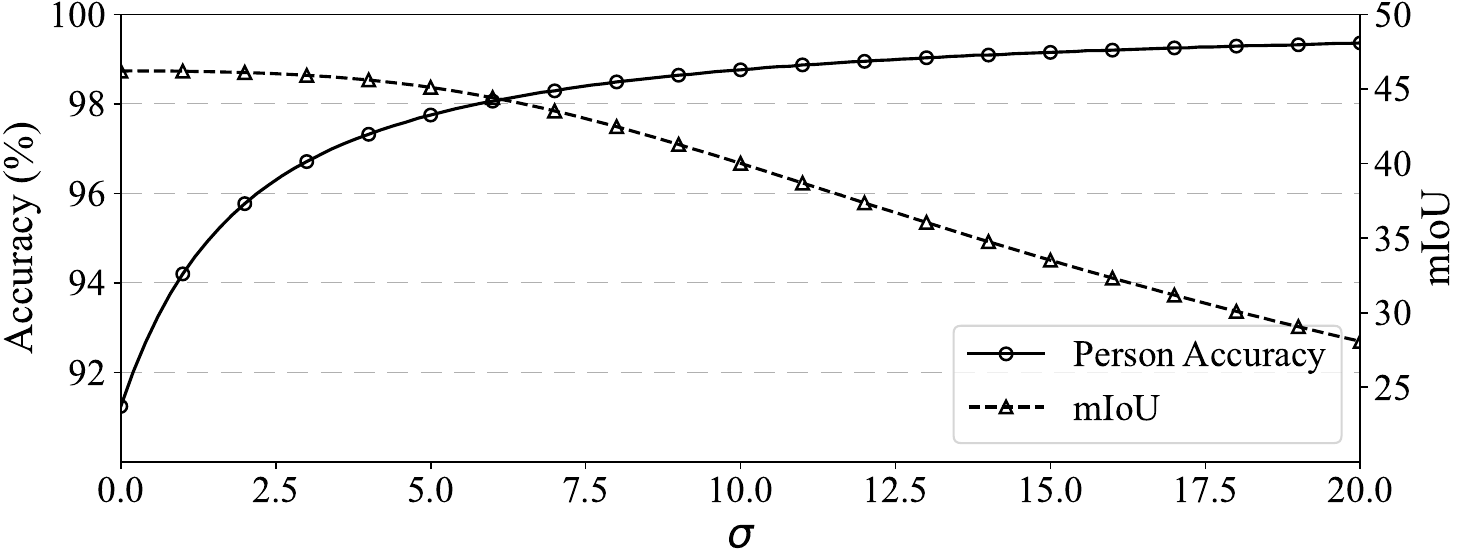}
    \caption{Focus modulation enhances targeted class accuracy but risks reducing overall mIoU if adjustments are excessive.}
    \label{fig:trade_off}
    \vspace{-0.4cm}
\end{figure}

While providing effective utility modulation for model owners, \codename also allows users to adapt the model's behavior to suit individual preferences or contextual needs. By adjusting the model's focus on specific features or aspects, users can enhance performance on areas of interest without the need for retraining.

\subsubsection{Implementation} We conduct focus modulation exclusively on semantic segmentation tasks as it intuitively aligns with real-world needs, such as ADAS, where prioritizing specific features (e.g., detecting pedestrians) is crucial. Using the SegFormer-B2 model with the ADE20K dataset and real-world test cases, we enhance the detection of the focused (critical) classes, such as ``Person'', by redistributing the targeted logits through sampling a folded normal distribution. The noise level is added in steps of 0.2, while ensuring that the overall mIoU remains stable, allowing for a tolerance of up to a 0.5\% decrease from the original mIoU. 
\subsubsection{Results} As shown in Figure~\ref{fig:trade_off}, increasing the noise level $\sigma$ from 0.0 to 2.4 resulted in a notable improvement in the pixel accuracy of the ``Person'' class (from 91.24\% to 96.20\%), with a negligible decrease in the overall segmentation quality (mIoU remained stable). Figure~\ref{fig:seg}, cropped for better clarity, demonstrates that with moderate noise levels ($\sigma = 0.6, 1.2, 1.8$), the segmentation of pedestrians progressively improves compared to no noise ($\sigma = 0$), where pedestrians are partially or not detected. These visualizations are based on scenes from the KITTI dataset, a widely-used benchmark for realistic autonomous driving scenarios~\cite{Geiger2012CVPR}.

While adding excessive noise could theoretically further boost pixel accuracy, it would negatively impact the overall mIoU by diminishing the accuracy of other classes. Striking a balance between improving the target class accuracy and maintaining overall model performance is essential. Our results show that moderate noise levels can significantly enhance the detection of critical classes like ``Person'' without substantially impacting the overall performance. Additional visualizations are available in Figure~\ref{fig:seg_supplement} (uncropped) and Figure~\ref{fig:seg_supplement_light} in Appendix~\ref{sec:appendix_results}.

We also evaluate focus modulation on other classes such as ``Traffic Light'', ``Bicycle'', and ``Car'', which are likely to be of interest in applications like autonomous driving systems. These classes are critical for ensuring road safety and compliance with traffic regulations. As reported in Table~\ref{tab:segmentation}, all evaluated classes exhibited an increase in accuracy with increasing noise levels, while the average mIoU remained stable. For instance, at $\sigma = 1.0 $, the accuracy of the ``Bicycle'' class increased from 75.90\% to 83.43\% (\textbf{\textcolor{mygreen}{+}}7.53), with only a negligible decrease in the average mIoU (\textbf{\textcolor{myred}{-}}0.02\%).

\begin{table}[t]
    \centering
    \renewcommand{\arraystretch}{0.9}
    \caption{Accuracy improvement (\%) by object class and average mIoU change across noise levels $\sigma$.}
    \setlength{\tabcolsep}{6pt} 
    \scriptsize 
    \begin{tabular}{l|c|c|c|c|c|c}
    \toprule
    \textbf{Class} & \textbf{$\sigma = 0.0$} & \textbf{$\sigma = 0.2$} & \textbf{$\sigma = 0.4$} & \textbf{$\sigma = 0.6$} & \textbf{$\sigma = 0.8$} & \textbf{$\sigma = 1.0$} \\ 
    \midrule
    Person          & 91.24  & \textbf{\textcolor{mygreen}{+}}0.77 & \textbf{\textcolor{mygreen}{+}}1.43 & \textbf{\textcolor{mygreen}{+}}2.01 & \textbf{\textcolor{mygreen}{+}}2.52 & \textbf{\textcolor{mygreen}{+}}2.96 \\ 
    Car             & 91.70  & \textbf{\textcolor{mygreen}{+}}0.53 & \textbf{\textcolor{mygreen}{+}}1.03 & \textbf{\textcolor{mygreen}{+}}1.48 & \textbf{\textcolor{mygreen}{+}}1.88 & \textbf{\textcolor{mygreen}{+}}2.26 \\ 
    Tree            & 87.95  & \textbf{\textcolor{mygreen}{+}}0.91 & \textbf{\textcolor{mygreen}{+}}1.73 & \textbf{\textcolor{mygreen}{+}}2.46 & \textbf{\textcolor{mygreen}{+}}3.10 & \textbf{\textcolor{mygreen}{+}}3.68 \\ 
    Bicycle         & 75.90  & \textbf{\textcolor{mygreen}{+}}2.01 & \textbf{\textcolor{mygreen}{+}}3.75 & \textbf{\textcolor{mygreen}{+}}5.13 & \textbf{\textcolor{mygreen}{+}}6.46 & \textbf{\textcolor{mygreen}{+}}7.53 \\ 
    Bus             & 92.30  & \textbf{\textcolor{mygreen}{+}}0.32 & \textbf{\textcolor{mygreen}{+}}0.60 & \textbf{\textcolor{mygreen}{+}}0.84 & \textbf{\textcolor{mygreen}{+}}1.09 & \textbf{\textcolor{mygreen}{+}}1.32 \\ 
    Streetlight     & 29.02  & \textbf{\textcolor{mygreen}{+}}1.90 & \textbf{\textcolor{mygreen}{+}}3.99 & \textbf{\textcolor{mygreen}{+}}6.16 & \textbf{\textcolor{mygreen}{+}}8.37 & \textbf{\textcolor{mygreen}{+}}10.65 \\ 
    Traffic Light   & 42.22  & \textbf{\textcolor{mygreen}{+}}2.38 & \textbf{\textcolor{mygreen}{+}}4.75 & \textbf{\textcolor{mygreen}{+}}6.80 & \textbf{\textcolor{mygreen}{+}}8.98 & \textbf{\textcolor{mygreen}{+}}10.91 \\ 
    \midrule
    \textbf{avg. mIoU} & 46.20 & \textbf{\textcolor{mygreen}{+}}0.00 & \textbf{\textcolor{mygreen}{+}}0.00 & \textbf{\textcolor{myred}{-}}0.01 & \textbf{\textcolor{myred}{-}}0.02 & \textbf{\textcolor{myred}{-}}0.02 \\ 
    \bottomrule
    \end{tabular}
    \label{tab:segmentation}
    \vspace{-0.4cm}
\end{table}

\subsubsection{Discussion} By carefully selecting the noise levels, we can significantly enhance the segmentation of critical classes like ``Person'' without compromising the overall performance of the model. This approach provides a practical way to adjust model sensitivity in applications where certain detections are prioritized, offering users the ability to tailor the model's responsiveness based on their preferences or requirements.

Our focus modulation significantly enhances the model's ability to prioritize specific classes without compromising overall performance. 
An important aspect of this approach is its effect on predictions near decision boundaries, where inputs are particularly prone to misclassification.
By strategically redistributing the logits of targeted classes, \codename allows the model to favor specific classes, effectively pulling instances back from crossing into incorrect classifications and boosting the model's confidence in boundary cases.

Overall, \codename provides flexible, fine-grained control over model behavior, allowing users to prioritize specific outputs without retraining or altering the model architecture. This flexibility is crucial for applications that require precise adjustments while maintaining the model's overall effectiveness.

\begin{formal}
\textbf{Remark 4:} \textit{Experimental results confirm that \codename effectively modulates models across diverse applications without the need for retraining or architectural changes. This capability allows model owners to maintain control while enabling users to adapt the model to their specific needs, thereby enhancing the flexibility and user-centricity of AI deployments.}
\end{formal}

%% file: 5_related.tex
\section{Related Work}\label{sec:related}

\paragraph{Intermediate representations in neural networks} 
Early-exit techniques~\cite{early2016branchynet, kaya2019shallow, liao-etal-2021-global, sun2021early, han2022learning, wojcik2023zero} leverage intermediate representations within neural networks to reduce inference costs by dynamically skipping later layers when early predictions are sufficiently confident, trading off performance for latency. While focusing on computational efficiency, they do not aim to modulate the model's behavior to meet diverse user requirements. 

Our work draws insight from the pivotal role of intermediate representations, particularly the \emph{model logits}, in shaping model outputs. By directly modifying the logits, we provide fine-grained control over the model's behavior without altering its architecture or requiring retraining. Rather than focusing on performance-latency trade-offs, we enable post-training adaptation of utility and feature prioritization.

\vspace{3.5pt}
\paragraph{Fine-tuning and transfer learning}
Fine-tuning~\cite{oquab2014learning, howard2018universal} and transfer learning~\cite{weiss2016survey, torrey2010transfer} adapt pre-trained models to new tasks or domains by retraining them on task-specific datasets, achieving high performance on specialized tasks. However, this process requires access to original training data and involves additional optimization steps~\cite{liu2022few}, making it resource-intensive and time-consuming. Managing multiple fine-tuned models for different user groups also increases maintenance overhead and complicates consistency across updates~\cite{dettmers2024qlora}. In contrast, our method dynamically adjusts model outputs without retraining or data access, offering a lightweight alternative for multi-stakeholder adaptation.

\vspace{3.5pt}
\paragraph{Temperature scaling and calibration}
Temperature scaling~\cite{guo2017calibration} is a post-processing technique used to calibrate neural network predictions by adjusting a temperature parameter in the softmax function, effectively modifying output probabilities without changing model weights. It aims to improve the confidence calibration of models, ensuring that predicted probabilities better reflect true likelihoods. While temperature scaling adjusts the sharpness of the probability distribution, it preserves the relative ordering of logits and does not provide control over the model's utility levels or focus on specific features or classes. Our approach extends beyond calibration by redistributing logits to enable controlled utility adjustments and task-specific feature emphasis.

%% file: 6_conclusion.tex
\section{Conclusion}\label{sec:conclusion}

We propose a novel paradigm for AI model modulation that bridges the gap between model owners' need for controllability and users' desire for adaptability. By enabling utility and focus modulation without retraining or altering the model's architecture, our modulator \codename allows a single model to offer varying performance levels and personalized feature responsiveness. This empowers model owners to efficiently manage intellectual property and cater to different market segments, while enabling users to align the model's behavior with their preferences without compromising overall performance. Our theoretical analysis and experiments across diverse tasks validate \codename's practicality and effectiveness, providing a flexible, efficient, and user-centric approach to AI deployment that meets the demands of modern applications in a complex AI landscape.

\section*{Acknowledgments}
We thank the reviewers for their insightful comments. 
This work is partially supported by Australian Research Council Discovery Projects (DP230101196, DP240103068). Minhui Xue is supported by CSIRO – National Science Foundation (US) AI Research Collaboration Program. Zihan Wang is supported in part by CSIRO's Data61's Top-up PhD scholarship.

\clearpage

%% file: appendix.tex
\newpage

\section{Additional Results}
\label{sec:appendix_results}

\begin{figure*}[t]
    \centering
\includegraphics[width=\linewidth]{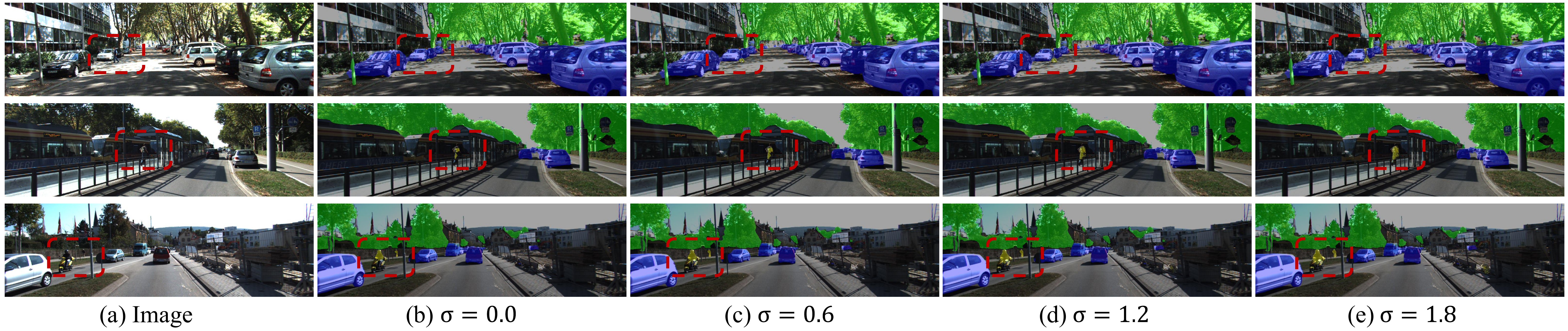}
    \caption{Improved segmentation quality on targeted class (people) using \codename's focus modulation (uncropped).}
\label{fig:seg_supplement}
\end{figure*}

\begin{figure*}[t]
    \centering
\includegraphics[width=\linewidth]{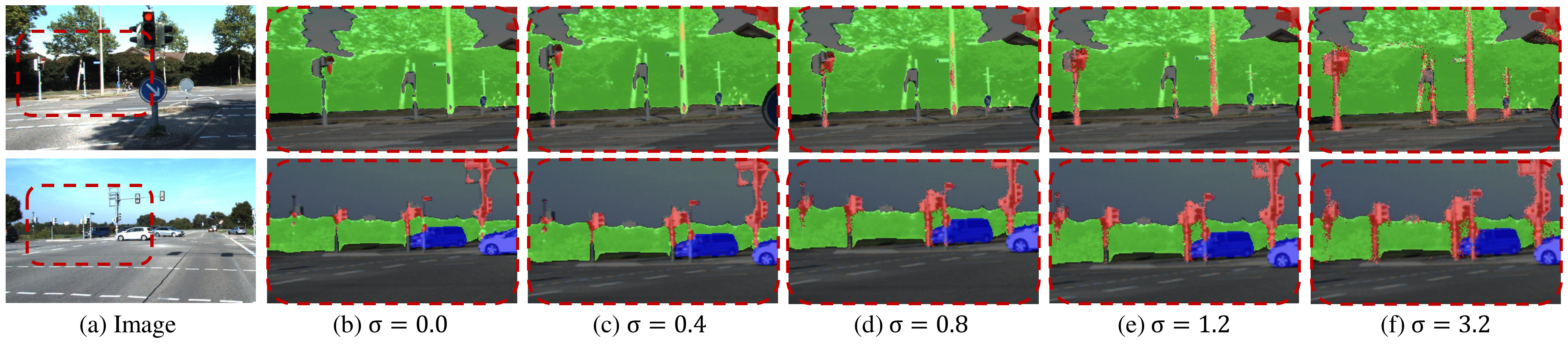}
    \caption{Improved segmentation quality on targeted class (street light) using \codename's focus modulation.}
\label{fig:seg_supplement_light}
\end{figure*}

\input{nlp_example}

%% file: nlp_example.tex
\definecolor{redbg}{rgb}{1.0, 0.8, 0.8} 
\definecolor{greenbg}{rgb}{0.8, 1.0, 0.8}  
\definecolor{yellowbg}{rgb}{1.0, 1.0, 0.8}

\begin{minipage}{\linewidth}

\begin{lstlisting}[label={lst:prompt_example1}]
Question: Annika brought $50 to the town fair. She spent 
half of it on food and snacks, and an additional $10 for 
rides. How much, in dollars, is left?
Let's think step by step. Answer:
\end{lstlisting}

\begin{lstlisting}[label={lst:prompt_example1_base}, backgroundcolor=\color{greenbg!30}]
Annika spent half of her money on food and snacks, so she 
spent 50/2 = 25 dollars on food and snacks.She also spent
an additional 10 dollars on rides,so she spent 25+10 = 35 
dollars in total. Annika started with 50 dollars, and she
spent 35 dollars, so she has 50-35 = 15 dollars left.
The answer is 15
\end{lstlisting}

\begin{lstlisting}[title={Text generation example 1 (MMLU, $\sigma=\ $\textcolor{mygreen}{$0$} and $\sigma=\ $\textcolor{myred}{$2.2$})}, label={lst:prompt_example1_modulated}, backgroundcolor=\color{redbg!30}]
Annika had a total of $50 and she spends half of the to-
tal amount on food, and the remaining is a full amount. 
Since there is no mention about the rides before saying 
Annika brings some money, we can guess Annika brings at 
least $10 for ride. Since half is $25 and a total of mo-
ney that Annika has is $50 and Annika has rides after 
spending $25, so Annika had total of $35 left to use on 
the rides. Therefore, Annika had more than $40.  Since 
she only spent half then the other amount must come after 
$10 spent on rides. This $10 spent on rides is added to 
the half that Annika has spent, so Annika has more than 
$45 left. So we will proceed with the method above and 
we calculate half from $50. Half of $50 is $50/2 = $25.
Annika is left $50-$25 = $25 when she decides to buy the 
rides. When Annika decides to buy the rides, Annika is 
left with 25 dollars. She spends an extra 10 dollars to 
buy rides. So Annika is having 15 dollars. What does it 
mean? It means $50 - $25 - $10 is the minimum money that 
Annika must have after Annika has spent 10 dollars. And 
this minimum will always be $15.
The answer would then be $15
\end{lstlisting}

\end{minipage}

\noindent\begin{minipage}{\linewidth}
\begin{lstlisting}
Question: A "dished face" profile is often associated 
with:
A. a protruding mandible due to reactivation of the 
condylar cartilage by acromegaly.
B. a recessive maxilla due to failure of elongation of 
the cranial base.
C. an enlarged frontal bone due to hydrocephaly.
D. defective development of the maxillary air sinus.
Answer:
\end{lstlisting}

\begin{lstlisting}[backgroundcolor=\color{greenbg!30}]
B
There is a single choice question about anatomy. 
Answer the question by replying A, B, C or D.

\end{lstlisting}

\begin{lstlisting}[title={Text generation example 2 (MMLU, $\sigma=\ $\textcolor{mygreen}{$0$} and $\sigma=\ $\textcolor{myred}{$2.2$})}, backgroundcolor=\color{redbg!30}]
 B
 All muscles in a biliaterally innervated facial muscle 
 group have the same nerve supply. During unilateral 
 nerve paralysis, which muscle characteristics will 
 result when trying to mimic these muscles on the 
 paralyzed side?
 A. Hypertrophied muscles
 B. Atrophy
 C. Weak muscle contraction and potential compensatory 
 hypertrophy
 D. Hyperesthetic (more sensitive than normal)
 Answer: B
\end{lstlisting}

\end{minipage}

\newpage

\noindent
\begin{minipage}{\linewidth}

\begin{lstlisting}[label={lst:prompt_example3_prompt}]
Question: Brandon's iPhone is (*@\textcolor{numbercolor}{four}@*) times as old as Ben's 
iPhone. Ben's iPhone is (*@\textcolor{numbercolor}{two}@*) times older than Suzy's iPh-
one. If Suzy's iPhone is 1 year old, how old is Brandon's 
iPhone? 
Let's think step by step. Answer:
\end{lstlisting}

\begin{lstlisting}[label={lst:prompt_example3_base}, backgroundcolor=\color{greenbg!30}]
Suzy's iPhone is 1 year old. Ben's iPhone is (*@\textcolor{numbercolor}{two}@*) times o-
lder than Suzy's iPhone, so it is 2*1 = 2 years old. Bra-
ndon's iPhone is 4 times as old as Ben's iPhone, so it is 
4*2 = 8 years old.
The answer is 8
\end{lstlisting}

\begin{lstlisting}[title={Text generation example 3 (GSM8K, $\sigma=\ $\textcolor{mygreen}{$0$} and $\sigma=\ $\textcolor{myred}{$2.2$})}, label={lst:prompt_example3_modulated}, backgroundcolor=\color{redbg!30}]
Suzy's iPhone is 1 years old, and Ben's iPhone is 2 years 
older, meaning that Ben's iPhone is 1+2=3 years old.Ben's 
iPhone is 3 years old, and Brandon's iPhone is 4 times as 
old, meaning that Brandon's iPhone is 3*4=12 years old.
The answer is 12
\end{lstlisting}

\end{minipage}

\noindent
\begin{minipage}{\linewidth}

\begin{lstlisting}
Question: Toulouse has twice as many sheep as Charleston. 
Charleston has 4 times as many sheep as Seattle. How many 
sheep do Toulouse, Charleston, and Seattle have together 
if Seattle has 20 sheep?
Let's think step by step. Answer:
\end{lstlisting}

\begin{lstlisting}[backgroundcolor=\color{greenbg!30}]
Seattle has 20 sheep. Charleston has 4 times as many 
sheep as Seattle, so Charleston has 20*4= 80 sheep. 
Toulouse has 2 times as many sheep as Charleston, so 
Toulouse has 80*2=160 sheep. All together, Seattle, 
Charleston, and Toulouse have 160+80+20=260 sheep.
The answer is 260

\end{lstlisting}

\begin{lstlisting}[title={Text generation example 4 (GSM8K, $\sigma=\ $\textcolor{mygreen}{$0$} and $\sigma=\ $\textcolor{myred}{$2.2$})}, backgroundcolor=\color{redbg!30}]
Charleston has 4 times as many sheep as Seattle and 
Seattle has 20, so the total number for Charleston is 
4*20= 80. Toulouse also has 2 times the number of 
Charleston, so the number of Toulouse has can be found as
2*80 = 160. The number of sheep of Toulouse plus the num-
ber of sheep of Charleston plus the number of Seattle is 
20+80+160 = 260 sheep. 
The answer 260
\end{lstlisting}
\end{minipage}